\newtheorem{thm}{Theorem}
\newtheorem{prop}{Proposition}
\newtheorem{lemma}{Lemma}
\def \R {\mathbb{R}}
\def \Kh {\widehat{K}}
\def \x {\mathbf{x}}
\def \D {\mathcal{D}}
\def \X {\mathcal{X}}
\def \v {\mathbf{v}}
\def \Dh {\widehat{\mathcal D}}
\def \xh {\widehat{\mathbf x}}
\def \u {\mathbf{u}}
\def \lh {\widehat{\lambda}}
\def \Wh {\widehat{W}}
\def \H {\mathcal{H}}
\def \Hk {\H_{\kappa}}
\def \vh {\widehat{\varphi}}
\def \Hh {\widehat{H}}
\def \f {\mathbf{f}}
\def \Phl {\overline{\Phi}}
\def \z {\mathbf{z}}
\title{An Improved Bound for the Nystr\"{o}m Method for Large Eigengap}
\date{}
\author{Mehrdad Mahdavi, Tianbao Yang, and Rong Jin \\ Department of Computer Science \\ Michigan State University \\ \small{\{mahdavim, yangtia1, rongjin\}@msu.edu}}
\begin{document}

\maketitle

\begin{abstract}
We develop an improved bound for the approximation error of the Nystr\"{o}m method under the assumption that there is a large eigengap in the spectrum of kernel matrix. This is based on the empirical observation that the eigengap has a significant impact on the approximation error of the Nystr\"{o}m method. Our approach is based on the concentration inequality of integral operator and the theory of matrix perturbation. Our analysis shows that when there is a large eigengap, we can improve the approximation error of the Nystr\"{o}m method from $O(N/m^{1/4})$ to $O(N/m^{1/2})$ when measured in Frobenius norm, where $N$ is the size of the kernel matrix, and $m$ is the number of sampled columns.
\end{abstract}

\section{Introduction}

The Nystr\"{o}m method has been used in kernel learning to approximate large kernel matrices~\citep{Fowlkes04spectralgrouping,Platt04fastembedding,kuma-2009-sampling,kai-2008-improved,Williams01usingthe,cortes-2010-nystrom,talwalkar-2008-large,Drineas05onthe,silva-2003-gloal,belabbas-2009-spectral,talwalkar-2010-matrix}. In order to evaluate the quality of Nystr\"{o}m method, we typically bound the norm of the difference between the original kernel matrix and the low rank approximation created by the Nystr\"{o}m method. Both the Frobenius norm and the spectral norm have been used to bound the difference between matrices~\citep{Drineas05onthe}. The key result from~\citep{Drineas05onthe} is that besides the intrinsic error due to the low rank approximation, the additional error caused by the Nystr\"{o}m method is $O(N/m^{1/4})$ when measured in Frobenius norm, provided that the diagonal elements of kernel matrix is bounded by a constant. In this work, we consider the case when there is a large eigengap in the spectrum of the kernel matrix, a scenario that has been examined in many studies of kernel learning~\citep{Bach:CSD-03-1249,Luxburg:2007:tutorial,Azran:Ghahramani:06,DS_AOS_09}. Given sufficiently large eigengap, we are able to improve the bound for the additional approximation error caused by the Nystr\"{o}m method to $O(N/m^{1/2})$ when measured in Frobenius norm. The key techniques used in our analysis are the concentration inequality of integral operator~\citep{smale-2009-geometry} and matrix perturbation theory~\citep{Stewart90}.

Our paper is structured as follows:  in section  \ref{sec-back-moti}, we demonstrate a discrepancy between the theoretical and experimental approximation error of the Nystr\"{o}m method  that motivates our work to improve the existing bounds. Section \ref{sec-bounds} introduces the problem formally and proves the bounds. Finally, section \ref{sec-conclusion} concludes the paper.

\section{Background and Motivation}
\label{sec-back-moti}

\begin{figure*}[t]
\centering\hspace*{-0.2in}
\subfigure[MNIST]{\includegraphics[width=0.4\textwidth]{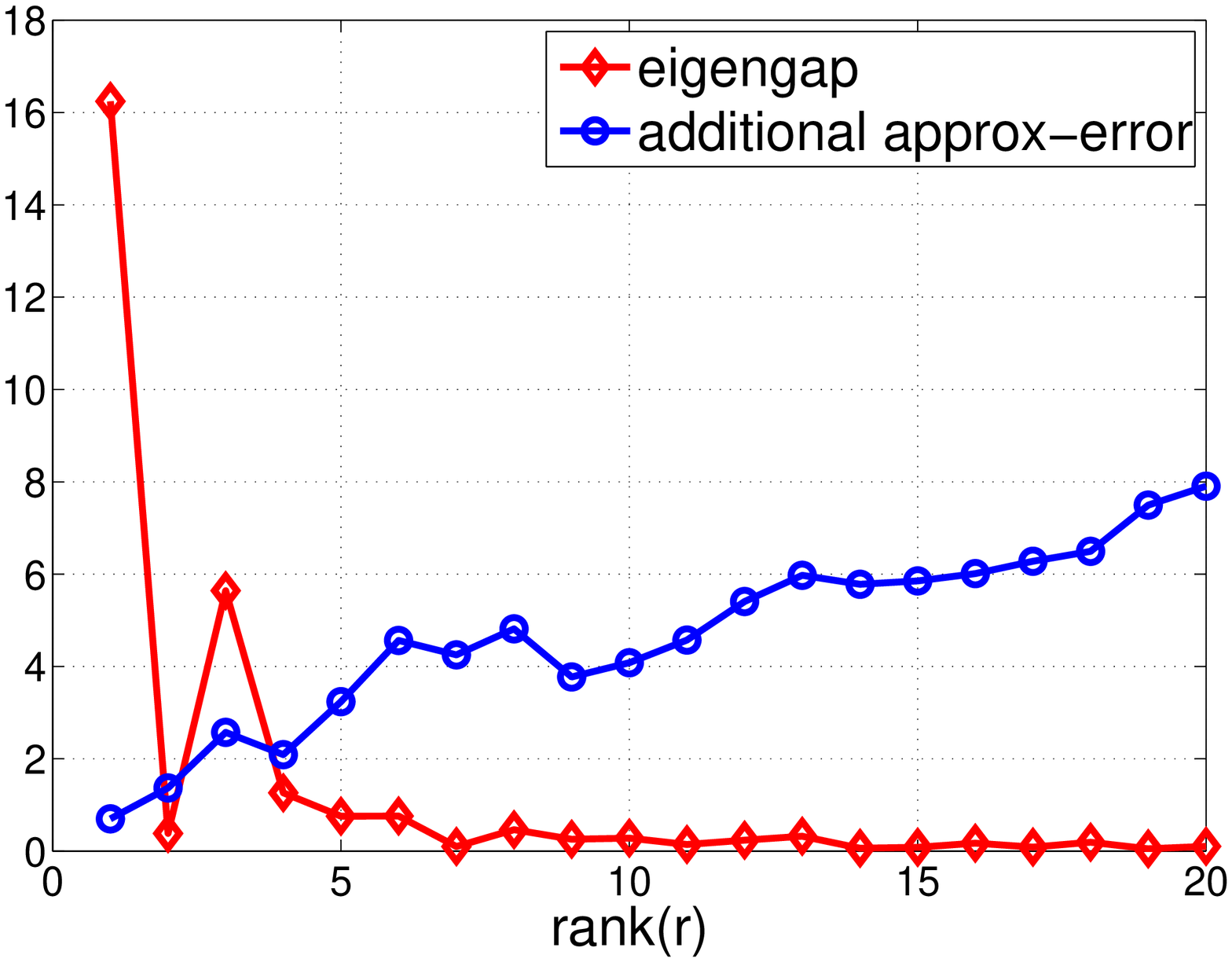}}\hspace*{-0.1in}
\subfigure[a7a]{\includegraphics[width=0.4\textwidth]{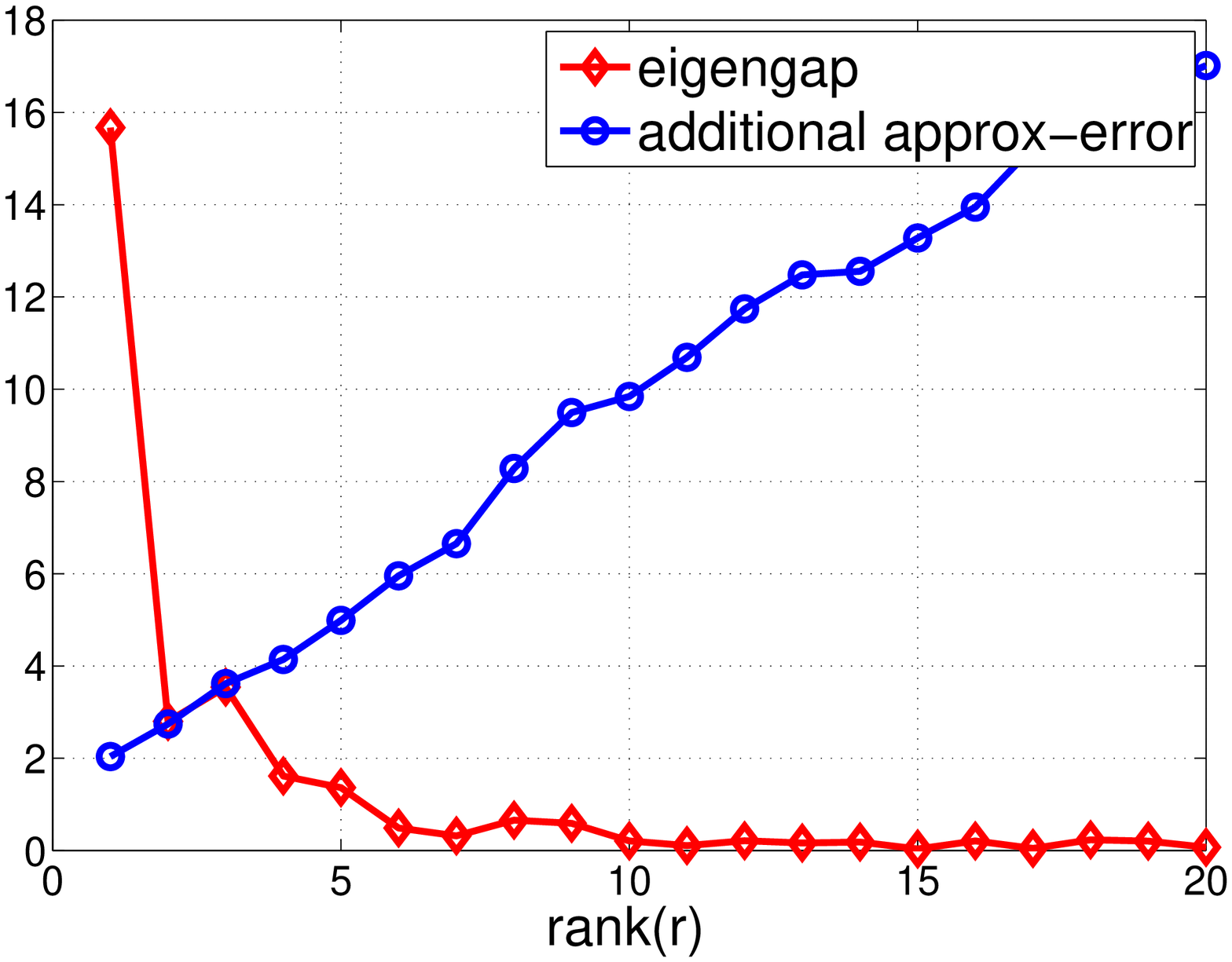}}\hspace*{-0.1in}\\
\subfigure[diabetes]{\includegraphics[width=0.4\textwidth]{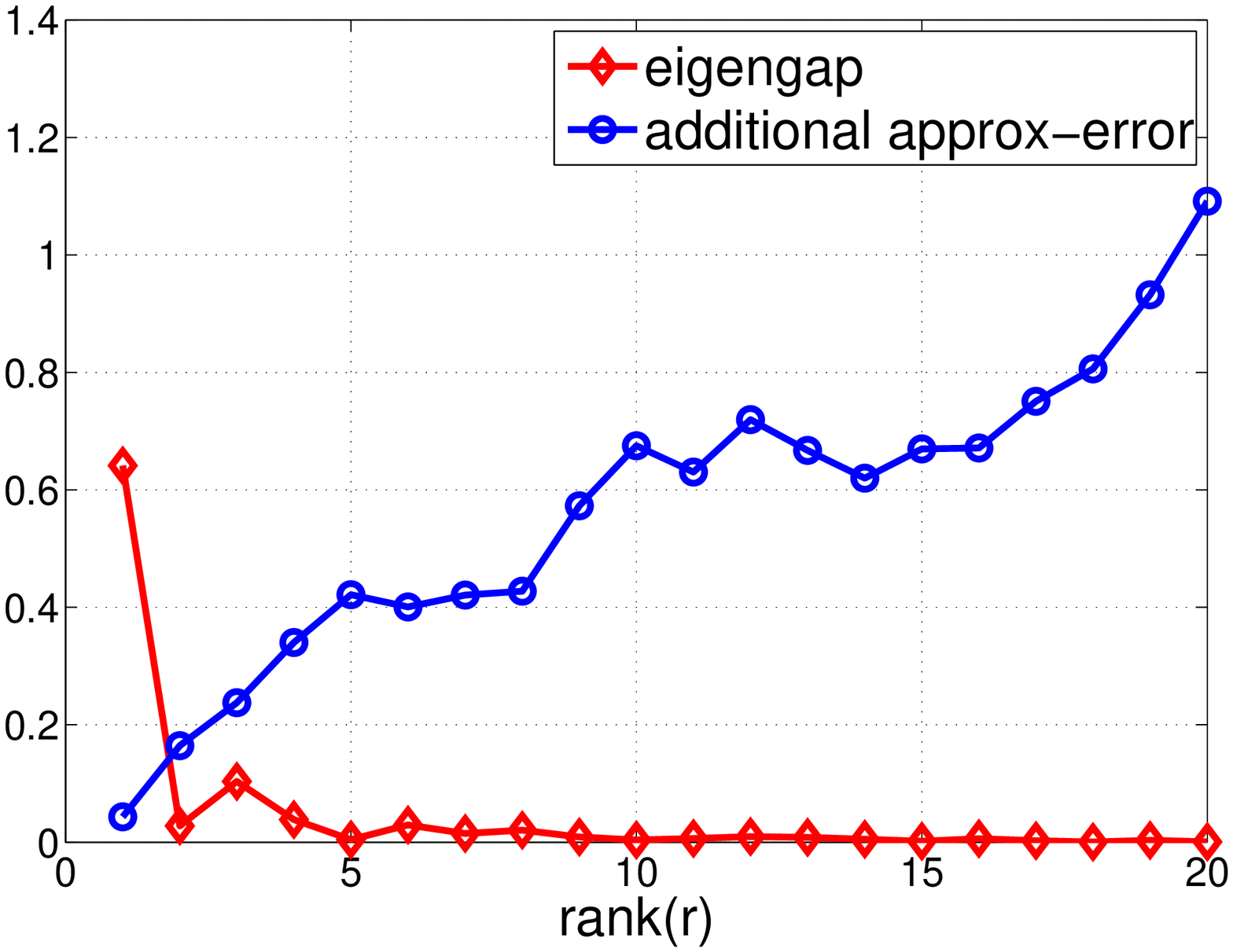}}\hspace*{-0.1in}
\subfigure[CPU]{\includegraphics[width=0.4\textwidth]{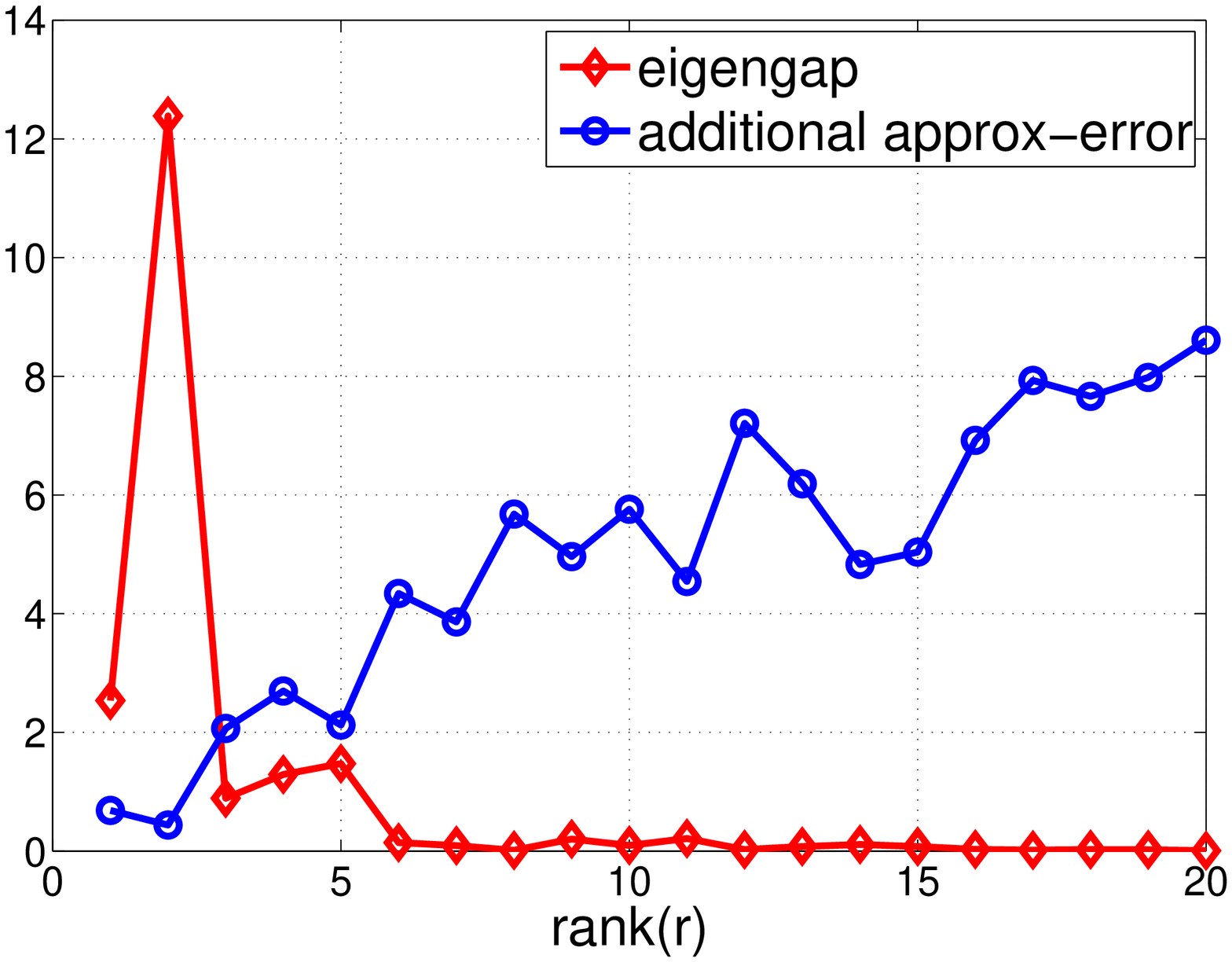}}
\vspace*{-0.1in}\caption{{\it Additional} approximation error $\|K-\Kh_r\|_F-\|K - K_r\|_F$ and eigengap $\lambda_r-\lambda_{r+1}$. Both the additional approximation error and eigengap are scaled appropriately so that they fall into the same range.}\label{fig:app}
\end{figure*}

The Nystr\"{o}m method was first suggested in~\citep{Williams01usingthe} to improve the computational efficiency of Gaussian process. It was then adopted by a number of studies to improve the computational efficiency of kernel learning~~\citep{Fowlkes04spectralgrouping,Platt04fastembedding,kuma-2009-sampling,kai-2008-improved,talwalkar-2008-large,Drineas05onthe,silva-2003-gloal,cortes-2010-nystrom,belabbas-2009-spectral,talwalkar-2010-matrix}. Several analysis have been presented to bound the approximation error by the Nystr\"{o}m method~\citep{Drineas05onthe,kuma-2009-sampling,belabbas-2009-spectral,talwalkar-2010-matrix}. Most of them are based on the result from~\citep{Drineas05onthe} except for~\citep{talwalkar-2010-matrix} whose analysis is limited to low rank kernel matrices and does not apply to the general case.

Let $K \in \R^{N\times N}$ be the kernel matrix to be approximated. Let $K_r$ be the $r$-rank best approximation of kernel matrix $K$, and let $\Kh_r$ be an approximate kernel matrix of rank $r$ generated by the Nystr\"{o}m method. Assume $K_{i,i}\leq 1$ for any $i \in [N]$. Let $m$ be the number of columns uniformly sampled from $K$ used to construct $\Kh_r$. Both Frobenius norm and spectral norm are used to bound the difference between $K$ and $\Kh_r$. We note that it is important to derive the approximation errors measured in both norms as they have different implications. According to~\citep{cortes-2010-nystrom}, the approximation error measured in spectral norm is closely related to the generalized performance of kernel classifiers. On the other hand, the approximation error measured in Frobenius norm have found applications in kernel PCA \cite{Scholkopf:1998:NCA:295919.295960}, low dimensional manifold embedding~\cite{Belkin:2001}, spectral clustering~\cite{Fowlkes:2004:SGU:960255.960312,ChittaJHJ11}. Improving the bound in the Frobenius norm will help us better understand the application of the Nystr\"{o}m method to those domains.

\citet{Drineas05onthe} shows that with a high probability,  we have
\begin{eqnarray}
    \|K - \Kh_r\|_2 & \leq & \|K - K_r\|_2 + O\left(\frac{N}{\sqrt{m}}\right), \label{eqn:bound-s-1} \\
    \|K - \Kh_r\|_F  & \leq & \|K - K_r\|_F + O\left(\frac{N}{m^{1/4}}\right), \label{eqn:bound-f-1}
\end{eqnarray}
where $\|\cdot\|_2$ and $\|\cdot\|_F$ stand for the spectral norm and  Frobenius norm  of a matrix, respectively. Compared to the bound in spectral norm in (\ref{eqn:bound-s-1}), the bound measured in Frobenius norm is significantly worse in terms of $m$, with the convergence rate of $O(m^{-1/4})$. The difference between the two bounds in (\ref{eqn:bound-s-1}) and (\ref{eqn:bound-f-1}) leads to the following question:

{\it Under what scenario it is possible to improve the convergence rate of the bound in Frobenius norm to that of the bound measured in the spectral norm.}


To this end, we first examine empirically the {\it additional} approximation error $\|K - \Kh_r\|_F - \|K - K_r\|_F$. Note that we intentionally remove $\|K - K_r\|_F$ from the approximation error because $\|K - K_r\|_F$ provides the lower bound for any approximation with matrix of rank $r$. Four UCI datasets are used in this empirical study, i.e., MNIST\footnote{\url{http://yann.lecun.com/exdb/mnist/}}, a7a, diabetes\footnote{\url{http://www.csie.ntu.edu.tw/~cjlin/libsvmtools}}, CPU\footnote{\url{http://archive.ics.uci.edu/ml/datasets/}}. The RBF kernel $\kappa(\x, \x') = \exp(-\lambda\|\x - \x'\|_2^2/d^2)$ is used, where $d^2$ is the average distance square between any two examples and $\lambda = 10$. The blue curves with legend $\circ$ in Figure~\ref{fig:app} show how the additional approximation error $\|K - \Kh_r\|_F - \|K - K_r\|_F$ varies according to the rank $r$. The overall trend, as indicated in Figure~\ref{fig:app}, is that the higher the rank, the larger the additional approximation error tends to be. In order to explain the dependence of the approximation error on rank, we examine the distribution of eigengap $\lambda_{r} - \lambda_{r+1}$ over the rank. The red curves with legend $\diamond$ in Figure~\ref{fig:app} show how the eigengap $\lambda_r - \lambda_{r+1}$ varies over the rank. Overall, we observe that the larger the rank, the smaller the eigengap. By combining the two observations, we conjecture that there is a strong dependence between the eigengap and the approximation error of the Nystr\"{o}m method. This motivates us to develop an eigengap dependent approximation error bound for the Nystr\"{o}m method. Our analysis show that when the eigengap $\lambda_r - \lambda_{r+1}$ is sufficiently large, the approximation error of the Nystr\"{o}m method, measured in Frobenius norm, can be improved to $O(N/\sqrt{m})$, i.e.
\begin{align*}
    \|K - \Kh_r\|_F  \leq  \|K - K_r\|_F + O\left(\frac{N}{\sqrt{m}}\right).
\end{align*}

We note that although the concept of eigengap has been exploited in many studies of kernel learning~\citep{Bach:CSD-03-1249,Luxburg:2007:tutorial,Azran:Ghahramani:06,DS_AOS_09}, to the best of our knowledge, this is the first time it has been incorporated in the analysis of the Nystr\"{o}m method.

In the development of the Nystr\"{o}m method, another important issue is how to sample the columns in the kernel matrix. We restrict our analysis to the uniform sampling. Although different sampling approaches have been suggested for the Nystr\"{o}m method~\citep{Drineas05onthe,kuma-2009-sampling,kai-2008-improved,belabbas-2009-spectral}, according to~\citep{kuma-2009-sampling}, for real-world datasets, uniform sampling seems to be the most efficient and gives comparable performance to the other sampling approaches. We notice that in~\citep{belabbas-2009-spectral}, the authors show a significantly better approximation bound for the Nystr\"{o}m method, both theoretically and empirically, when sampling the columns based on the determinant of the submatrix formed by the selected columns and rows, which is also referred to as determinantal processes~\citep{hough-2006-determinantal}. It is however important to point out that the determinantal process is usually computationally expensive as it requires computing the determinant of the submatrix for the selected columns/rows, making it unsuitable for the case when a large number of columns are needed to be sampled.

\section{Approximation Error Bound by the Nystr\"{o}m Method}
\label{sec-bounds}
Let $\D = \{\x_1, \ldots, \x_N\}$ be a collection of $N$ samples, and $K = [\kappa(\x_i, \x_j)]_{N\times N}$ be the kernel matrix for the samples in $\D$, where $\kappa(\cdot, \cdot)$ is a kernel function. For simplicity, we assume $\kappa(\x, \x) \leq 1$ for any $\x \in \X$. Let $\Hk$ be the Reproducing Kernel Hilbert Space (RKHS) endowed with kernel $\kappa(\cdot, \cdot)$. We denote by $(\v_i, \lambda_i), i=1, \ldots, N$ the eigenvectors and eigenvalues of $K$ ranked in the descending order of eigenvalues. Define $V=(\v_1,\cdots, \v_N)$ and $V_{i,j}=[V]_{i,j}$. In order to build the low rank approximation of kernel matrix $K$ of rank $r$, the Nystr\"{o}m method first samples $m < N$ examples randomly from $\D$, denoted by $\Dh = \left\{\xh_1, \ldots, \xh_m\right\}$. It then computes a sample kernel matrix $\Kh = [\kappa(\xh_i, \xh_j)]_{m\times m}$. Let $(\u_i, \lh_i), i=1, \ldots, r$ be the first $r$ eigenvalues and eigenvectors of matrix $\Kh$, and let $U=(\u_1,\cdots, \u_r)$, $U_{i,j}=[U]_{i,j}$. We assume $\lh_r > 0$ is strictly positive and define matrix $\Wh$ as
\begin{eqnarray*}
    \Wh = \sum_{i=1}^r \frac{1}{\lh_i} \u_i \u_i^{\top}. \label{eqn:wh}
\end{eqnarray*}
The approximate low rank matrix $\Kh_r$, computed by the Nystr\"{o}m method, is given by
\begin{eqnarray*}
\Kh_r = K_b\Wh K_b^{\top} \label{eqn:nystrom},
\end{eqnarray*}
where $K_b = [\kappa(\x_i, \xh_j)]_{N\times m}$ measures the similarity between the samples in $\D$ and $\Dh$. As already mentioned, we focus on the scenario when the eigengap $\lambda_r - \lambda_{r+1}$ is sufficiently large~\footnote{The precise definition of large eigengap will be given later}. Our analysis is mainly based on the concentration inequality of integral operator~\citep{smale-2009-geometry} and matrix perturbation theory~\citep{Stewart90}.

\subsection{Preliminaries} \label{sec:preliminaries}
We define an integral operator $L_N$ and $L_m$ based on the samples in $\D$ and $\Dh$, respectively, as
\begin{align*}
    &L_N[f](\cdot) = \frac{1}{N}\sum_{i=1}^N \kappa(\x_i, \cdot) f(\x_i), \\
    & L_m[f](\cdot) = \frac{1}{m}\sum_{i=1}^m \kappa(\xh_i, \cdot) f(\xh_i),
\end{align*}
where $f \in \Hk$ is any function in $\Hk$. The eigenvalues of the integral operator $L_N$ and  $L_m$, according to~\citep{smale-2009-geometry}, are $\lambda_i/N, i\in[N]$ and $\lh_i/m, i\in[m]$, respectively. Let $\varphi_1(\cdot), \ldots, \varphi_N(\cdot)$ be the corresponding eigenfunctions of $L_N$ that are normalized by functional norm, i.e., $\langle \varphi_i, \varphi_j \rangle_{\Hk} = \delta(i,j), \quad \forall (i, j)\in[N]\times[N]$. According to~\citep{smale-2009-geometry}, the eigenfunctions are given by
\begin{eqnarray}
    \varphi_j(\cdot) = \frac{1}{\sqrt{\lambda_j}}\sum_{i=1}^N V_{i,j} \kappa(\x_i, \cdot), j\in[N]. \label{eqn:varphi}
\end{eqnarray}
Using the eigenfunctions expressed in (\ref{eqn:varphi}), we can write $\kappa(\x_j, \cdot), j\in[N]$ as
\begin{align}\label{eqn:keig}
    \kappa(\x_j, \cdot) &= \sum_{i=1}^N  \sqrt{\lambda_i} V_{j, i}\varphi_i(\cdot)=\sum_{i=1}^N\langle\kappa(\x_j, \cdot), \varphi_i\rangle \varphi_i.
\end{align}
It is easy to verify that $L_N$ can be written in the base of $\varphi_i, i\in[N]$ by
\begin{align}\label{eqn:eigL}
L_N(f)(\cdot) = \frac{1}{N}\sum_{i=1}^N\lambda_i\varphi_i\langle f, \varphi_i\rangle_{\Hk}.
\end{align}

Let $\vh_j, j\in[m]$ be the corresponding eigenvectors  of the integral operator $L_m$. Similar to $L_N$, the eigenfunction $\vh_i$ is given by
\begin{eqnarray}
    \vh_j(\cdot) = \frac{1}{\sqrt{\lh_j}}\sum_{i=1}^mU_{i,j} \kappa(\xh_i, \cdot). \label{eqn:vh}
\end{eqnarray}

We define the Hilbert Schmidt norm  of operator $L: \mathcal H_k\rightarrow \mathcal H_k$ by
 \begin{align}\label{eqn:hs}
\|L\|_{HS}= \sqrt{\sum_{i, j=1}^N \langle \varphi_i, L\varphi_j\rangle_{\Hk}^2}.
\end{align}
Let $\|L\|_2$ denote the spectral norm of operator $L$ defined by
\begin{align*}
\|L\|_2 =  \max_{\|f\|_{\Hk}\leq 1}\langle f, Lf\rangle_{\Hk}.
\end{align*}
where $\langle\cdot, \cdot \rangle_{\Hk}$ denotes the inner product in Hilbert space $\Hk$. In the sequel, we use $\langle\cdot, \cdot\rangle$ for short.

We state the concentration inequality about  the two integral operators in the following.
\begin{lemma}\label{lemma:1}(Proposition 1~\citep{smale-2009-geometry})
Let $\xi$ be a random variable on $(\X, P_{\X})$ with values in a Hilbert space $(\H,\|\cdot\|)$. Assume $\|\xi\| \leq M < \infty$ almost sure. Then with a probability at least $1 - \delta$, we have
\[
    \left\|\frac{1}{m}\sum_{i=1}^m \xi(\x_i) - \mathrm E[\xi]\right\| \leq \frac{4M\ln(2/\delta)}{\sqrt{m}}.
\]
\end{lemma}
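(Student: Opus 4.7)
The plan is to prove this by combining an $L^2$ bound on the expected deviation with a Hilbert-space concentration inequality of the Pinelis-Yurinskii type (or equivalently, a McDiarmid bounded-differences argument applied to the real-valued function obtained by taking the norm). Write $\eta_i = \xi(\x_i) - \mathrm E[\xi]$ for $i=1,\ldots,m$, so that the $\eta_i$ are i.i.d., mean-zero, and satisfy $\|\eta_i\| \leq 2M$ almost surely by the triangle inequality.

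First, I would bound the expected norm of the average. By Jensen's inequality, then expanding the squared norm via the Hilbert-space inner product,
\begin{align*}
\left(\mathrm E\left\|\frac{1}{m}\sum_{i=1}^m\eta_i\right\|\right)^{2} \leq \mathrm E\left\|\frac{1}{m}\sum_{i=1}^m\eta_i\right\|^{2} = \frac{1}{m^2}\sum_{i=1}^m \mathrm E\|\eta_i\|^2 \leq \frac{4M^2}{m},
\end{align*}
where the cross terms $\mathrm E\langle\eta_i,\eta_j\rangle$ for $i\neq j$ vanish by independence and the zero-mean property. Hence $\mathrm E\|(1/m)\sum_i\eta_i\|\leq 2M/\sqrt{m}$.

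Second, I would concentrate the Hilbert-space norm around this mean. Define the real-valued function
\[
F(\x_1,\ldots,\x_m) = \left\|\frac{1}{m}\sum_{i=1}^m\bigl(\xi(\x_i)-\mathrm E[\xi]\bigr)\right\|.
\]
Replacing any single coordinate $\x_i$ by an independent copy changes $F$ by at most $2M/m$ (triangle inequality together with $\|\xi\|\leq M$ a.s.), so McDiarmid's inequality yields $\Pr(F \geq \mathrm E F + t) \leq \exp(-mt^2/(2M^2))$. Setting the right-hand side equal to $\delta$ and adding the mean bound produces a deviation of order $M\bigl(1+\sqrt{\ln(1/\delta)}\bigr)/\sqrt{m}$, which is absorbed into $4M\ln(2/\delta)/\sqrt{m}$ after a crude constant simplification.

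The main subtlety is that $\xi$ takes values in a general Hilbert space, so scalar Hoeffding does not apply directly; however McDiarmid operates on the real-valued norm $F$, and the bounded-differences constant $2M/m$ is sample-independent precisely because of the uniform almost-sure bound $\|\xi\|\leq M$. An equivalent route I would consider is to apply Pinelis-Yurinskii martingale concentration directly to the $\H$-valued partial sums $S_k=\sum_{i\leq k}\eta_i$, which bundles the mean and concentration steps into a single sub-Gaussian tail bound with the same rate.
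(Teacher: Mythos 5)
The paper itself contains no proof of this lemma: it is imported verbatim as Proposition~1 of \citep{smale-2009-geometry}, whose argument rests on a Bennett/Bernstein-type exponential inequality for Hilbert-space-valued random variables (a Pinelis--Yurinskii-style martingale bound), yielding $\left\|\frac{1}{m}\sum_i\xi(\x_i)-\mathrm E[\xi]\right\|\le \frac{2M\ln(2/\delta)}{m}+\sqrt{\frac{2\sigma^2(\xi)\ln(2/\delta)}{m}}$ with $\sigma^2(\xi)=\mathrm E\|\xi\|^2$; the stated form then follows from the crude bounds $\sigma\le M$ and $\ln(2/\delta)\ge\ln 2\ge 1/2$. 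Your route --- Jensen for the expectation of the norm plus McDiarmid applied to the scalar function $F$ --- is a genuinely different and more elementary argument, and it does deliver the claim: you obtain $F\le \frac{2M}{\sqrt m}+\frac{M\sqrt{2\ln(1/\delta)}}{\sqrt m}$ with probability $1-\delta$, and since $g(u)=4(u+\ln 2)-2-\sqrt{2u}$ attains its minimum at $u=1/32$ with value $\approx 0.65>0$, this is indeed at most $\frac{4M\ln(2/\delta)}{\sqrt m}$; your ``crude constant simplification'' is therefore legitimate, but it is the only place where the specific constant $4$ and the argument $2/\delta$ matter, so you should carry out this one-line verification rather than assert it. As for what each approach buys: yours needs no vector-valued exponential inequality at all, only the bounded-differences constant $2M/m$ for the real-valued norm, which suffices because the lemma is stated with the worst-case bound $M$; the Bernstein-type proof of the source retains the variance $\sigma^2$ and is strictly sharper when $\sigma\ll M$, a refinement this paper never exploits. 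Two minor points: $\mathrm E\|\eta_i\|^2=\mathrm E\|\xi\|^2-\|\mathrm E[\xi]\|^2\le M^2$, so your $4M^2/m$ can be tightened to $M^2/m$ (immaterial here), and in the McDiarmid step you should note that the bounded-differences property follows from the reverse triangle inequality applied to $F$, which you implicitly use.
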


\begin{thm} \label{thm:concentration}
With a probability $1 - \delta$, we have
\[
\|L_N - L_m\|_{HS} \leq \frac{4\ln(2/\delta)}{\sqrt{m}},
\]
where $\|L\|_{HS}$ is defined in equation~(\ref{eqn:hs}).
\end{thm}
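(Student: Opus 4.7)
The plan is to recognize Theorem 1 as a direct application of Lemma 1 with the ambient Hilbert space taken to be the space of Hilbert--Schmidt operators on $\Hk$, after writing $L_m$ as an empirical average whose expectation is $L_N$.

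First, I would introduce the operator-valued random variable. For $\mathbf{x} \in \X$, let $A_{\x} : \Hk \to \Hk$ be the rank-one operator defined by $A_\x[f] = \kappa(\x,\cdot)\langle f,\kappa(\x,\cdot)\rangle_{\Hk} = \kappa(\x,\cdot) f(\x)$, i.e.\ $A_\x = \kappa(\x,\cdot)\otimes \kappa(\x,\cdot)$. Then the definitions of $L_N$ and $L_m$ immediately give
\begin{align*}
L_N = \frac{1}{N}\sum_{i=1}^N A_{\x_i}, \qquad L_m = \frac{1}{m}\sum_{i=1}^m A_{\xh_i}.
\end{align*}
Since $\Dh=\{\xh_1,\dots,\xh_m\}$ is sampled uniformly from $\D$, the variables $A_{\xh_1},\dots,A_{\xh_m}$ are i.i.d.\ copies of a single operator-valued random variable $\xi$, and $\mathrm{E}[\xi]=\frac{1}{N}\sum_{i=1}^N A_{\x_i}=L_N$.

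Next, I would verify the hypotheses of Lemma 1. The space of Hilbert--Schmidt operators on $\Hk$ is itself a (separable) Hilbert space under the inner product $\langle A,B\rangle_{HS}=\sum_{i,j}\langle \varphi_i,A\varphi_j\rangle\langle \varphi_i,B\varphi_j\rangle$, which matches the norm in (\ref{eqn:hs}). The key step is the almost-sure bound $\|\xi\|_{HS}\le 1$: a rank-one operator $v\otimes v$ has Hilbert--Schmidt norm equal to $\|v\|_{\Hk}^2$, so
\begin{align*}
\|A_\x\|_{HS}^2 = \|\kappa(\x,\cdot)\|_{\Hk}^4 = \kappa(\x,\x)^2 \le 1
\end{align*}
by the assumption $\kappa(\x,\x)\le 1$. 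Hence we may take $M=1$ in Lemma 1.

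Finally, applying Lemma 1 to the i.i.d.\ samples $A_{\xh_1},\dots,A_{\xh_m}$ in the Hilbert space of HS operators yields, with probability at least $1-\delta$,
\begin{align*}
\|L_m-L_N\|_{HS} = \Bigl\|\tfrac{1}{m}\sum_{i=1}^m A_{\xh_i}-\mathrm{E}[\xi]\Bigr\|_{HS} \le \frac{4\ln(2/\delta)}{\sqrt{m}},
\end{align*}
which is exactly the claim. The only mildly delicate step is identifying the correct ambient Hilbert space (HS operators on $\Hk$) and checking the HS-norm bound on $A_\x$; once those are in place the proof is a one-line invocation of Lemma 1.
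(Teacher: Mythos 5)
Your proposal is correct and follows essentially the same route as the paper: both write $L_m$ as an empirical average of the rank-one operators $\kappa(\xh_i,\cdot)\otimes\kappa(\xh_i,\cdot)$ with expectation $L_N$, bound the Hilbert--Schmidt norm of each summand by $\kappa(\xh_i,\xh_i)\leq 1$, and invoke Lemma~\ref{lemma:1} in the Hilbert space of Hilbert--Schmidt operators. The only cosmetic difference is that you use the identity $\|v\otimes v\|_{HS}=\|v\|_{\Hk}^2$ directly, whereas the paper expands the norm in the eigenfunction basis $\{\varphi_i\}$ to reach the same bound.
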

\begin{proof}
Define $\xi(\xh_i)$ as a rank one linear operator, i.e.,
\[
    \xi(\xh_i)[f](\cdot) = \kappa(\xh_i, \cdot) f(\xh_i).
\]
Apparently, $L_m = \frac{1}{m}\sum_{i=1}^m \xi(\xh_i)$ and $\mathrm{E}[\xi(\xh_i)] = L_N$. Let $\|\cdot\|_{HS}$ be the norm used in Lemma 1. We complete the proof by using the result from Lemma 1 and the fact
\begin{align*}
\|\xi(\xh_k)\|_{HS}&= \sqrt{\sum_{i,j=1}^N\langle\varphi_i, \kappa(\xh_k,\cdot)\varphi_j(\xh_k) \rangle^2}\\
&=\sqrt{\sum_{i,j=1}^N\varphi_i(\xh_k)^2\varphi_j(\xh_k)^2}= \kappa(\xh_k, \xh_k)\leq 1,
\end{align*}
where the last equality follows equation~(\ref{eqn:keig}).
\end{proof}
%
%
\subsection{Bounding the Approximation Error by Operator Norm}

Based on the first $r$ eigenfunctions of $L_N$ and $L_m$, we define two additional linear operators $H_r$ and $\Hh_r$ as
\begin{align*}
    &H_r[f](\cdot) = \sum_{i=1}^r \varphi_i(\cdot) \langle \varphi_i, f \rangle,\\
    &\Hh_r[f](\cdot) = \sum_{i=1}^r \vh_i(\cdot) \langle \vh_i, f \rangle. \label{eqn:H}
\end{align*}
The following lemma relates $H_r$ and $\Hh_r$ to matrices $K_r$ and $\Kh_r$, respectively.
\begin{prop}
Assume $\lh_r > 0$ and $\lambda_r > 0$. We have  for any $(i,j)\in[N]\times[N]$
\begin{align*}
&\left[\Kh_r \right]_{i,j} = \langle \kappa(\x_i, \cdot), \Hh_r \kappa(\x_j, \cdot) \rangle,\\\
&\left[K_r\right]_{i,j} = \langle \kappa(\x_i, \cdot), H_r\kappa(\x_j, \cdot)\rangle.
\end{align*}
\end{prop}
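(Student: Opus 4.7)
The plan is to reduce both identities to direct computations by using the reproducing property of $\Hk$, the explicit formulas for the eigenfunctions given in (\ref{eqn:varphi}) and (\ref{eqn:vh}), and the eigen-equations $KV = V\,\mathrm{diag}(\lambda_1,\ldots,\lambda_N)$ and $\Kh U = U\,\mathrm{diag}(\lh_1,\ldots,\lh_r)$. Both identities fall out after short but slightly fiddly index manipulations, so the proof is essentially bookkeeping.

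For the second identity, I would first unfold the definition of $H_r$ and write
\[
\langle \kappa(\x_i,\cdot), H_r \kappa(\x_j,\cdot)\rangle = \sum_{k=1}^r \langle \kappa(\x_i,\cdot),\varphi_k\rangle\,\langle \varphi_k,\kappa(\x_j,\cdot)\rangle.
\]
By the reproducing property each inner product equals an evaluation, e.g.\ $\langle \kappa(\x_i,\cdot),\varphi_k\rangle = \varphi_k(\x_i)$. Substituting (\ref{eqn:varphi}) gives $\varphi_k(\x_i) = \lambda_k^{-1/2}\sum_{\ell} V_{\ell,k}\,\kappa(\x_i,\x_\ell) = \lambda_k^{-1/2}[KV]_{i,k}$, and using $KV = V\,\mathrm{diag}(\lambda_1,\ldots,\lambda_N)$ this simplifies to $\sqrt{\lambda_k}\,V_{i,k}$. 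Plugging back, the sum becomes $\sum_{k=1}^r \lambda_k V_{i,k} V_{j,k}$, which is precisely the $(i,j)$-entry of the rank-$r$ spectral truncation $K_r = \sum_{k=1}^r \lambda_k \v_k \v_k^{\top}$.

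For the first identity, I would proceed in exactly the same spirit, but now relate evaluations of $\vh_k$ at the data points $\x_i$ to the rectangular matrix $K_b$. From the reproducing property and (\ref{eqn:vh}) we get $\vh_k(\x_i) = \lh_k^{-1/2}\sum_{\ell=1}^m U_{\ell,k}\,\kappa(\xh_\ell,\x_i) = \lh_k^{-1/2}[K_b \u_k]_i$. Therefore
\[
\langle \kappa(\x_i,\cdot), \Hh_r \kappa(\x_j,\cdot)\rangle = \sum_{k=1}^r \vh_k(\x_i)\,\vh_k(\x_j) = \sum_{k=1}^r \frac{[K_b \u_k]_i\,[K_b \u_k]_j}{\lh_k}.
\]
Recognizing the right-hand side as $[K_b\,\Wh\,K_b^{\top}]_{i,j}$ for $\Wh = \sum_{k=1}^r \lh_k^{-1}\u_k\u_k^{\top}$ completes the proof.

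I do not anticipate a genuine obstacle here; the only places where care is needed are (i) invoking $KV = V\,\mathrm{diag}(\lambda_i)$ at the right moment to cancel the $\lambda_k^{-1/2}$ factor in $\varphi_k$, and (ii) recognizing the outer-product structure of $\Wh$ when reassembling the sum into $K_b \Wh K_b^{\top}$. The assumption $\lh_r,\lambda_r > 0$ is used implicitly to ensure that $\varphi_k$ and $\vh_k$ are well defined for $k\leq r$ through the $\lambda_k^{-1/2}$ and $\lh_k^{-1/2}$ normalizations.
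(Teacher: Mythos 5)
Your proof is correct and follows essentially the same route as the paper: expand $\Hh_r$ (resp.\ $H_r$), use the reproducing property together with the eigenfunction formulas (\ref{eqn:vh}) and (\ref{eqn:varphi}), and reassemble the sums into $K_b\Wh K_b^{\top}$ and the rank-$r$ truncation of $K$. The only cosmetic difference is that for the second identity the paper invokes $K_r = KWK$ with $W=\sum_{i=1}^r \lambda_i^{-1}\v_i\v_i^{\top}$ and repeats the same computation, whereas you cancel the $\lambda_k^{-1/2}$ factors via $KV=V\,\mathrm{diag}(\lambda_1,\ldots,\lambda_N)$ and identify $\sum_{k=1}^r\lambda_k V_{i,k}V_{j,k}$ with $[K_r]_{i,j}$ directly, which is equivalent.
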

\begin{proof}
By the definition of $\Hh_r$ and equation~(\ref{eqn:vh}),  we have
\begin{align*}
&\langle \kappa(\x_i, \cdot), \Hh_r \kappa(\x_j, \cdot) \rangle\\
 &= \sum_{k=1}^r \frac{1}{\lh_k}\langle \kappa(\x_i, \cdot), \vh_k\rangle\langle \kappa(\x_j, \cdot), \vh_k \rangle \\
& =  \sum_{a, b=1}^m \sum_{k=1}^r \frac{1}{\lh_k}U_{a,k} U_{b,k} \langle \kappa(\x_i, \cdot), \kappa(\xh_a, \cdot)\rangle \langle \kappa(\x_j, \cdot), \kappa(\xh_b, \cdot)\rangle \\
& =  \sum_{a, b=1}^m \sum_{k=1}^r \frac{U_{a,k} U_{b,k}}{\lh_k} [K_b]_{i, a} [K_b]_{j, b}\\
& = \sum_{a, b = 1}^m [K_b]_{i,a}\Wh_{a, b} [K_b]_{j, b}  =  [K_b \Wh K_b^{\top}]_{i,j} = [\Kh_r]_{i,j}.
\end{align*}
Using the fact that $K_r = KWK$, where
\begin{eqnarray*}
W = \sum_{i=1}^r \frac{1}{\lambda_i} \v_i\v_i^{\top} \label{eqn:w},
\end{eqnarray*}
we apply the same proof to $K_r$.
\end{proof}

Next, we will relate $\|K_r - \Kh_r\|_F$ to $\Delta H= H_r - \Hh_r$.  Note that $L_N$ and $H_r, \Hh_r$ are self-adjoint operators, and so is $\Delta H$.  In the proof of Theorem~\ref{thm:1}, we repeatedly use
\begin{align*}
\langle f, \Delta H g\rangle & = \langle \Delta H f, g\rangle,\\
NL_N(f) &= \sum_{i=1}^N \kappa(\x_i,\cdot) \langle f, \kappa(\x_i, \cdot)\rangle.
\end{align*}
\begin{thm} \label{thm:1}
Assume $\lh_r > 0$ and $\lambda_r > 0$. We have
\begin{eqnarray*}
\|\Kh_r - K_r\|_F \leq \sqrt{\lambda_1 N} \|\Delta H\|_2 \leq N\|\Delta H\|_2.
\end{eqnarray*}
\end{thm}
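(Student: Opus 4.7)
My plan is to use the proposition to reduce everything to inner products involving $\Delta H$, then expand the Frobenius norm as a double sum and collapse the inner sum via the identity $NL_N(f)=\sum_{i}\kappa(\x_i,\cdot)\langle f,\kappa(\x_i,\cdot)\rangle$ that the authors have explicitly highlighted.

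By the proposition, $[K_r-\Kh_r]_{i,j}=\langle \kappa(\x_i,\cdot),\Delta H\,\kappa(\x_j,\cdot)\rangle$, so self-adjointness of $\Delta H$ lets me write
\begin{align*}
\|K_r-\Kh_r\|_F^2=\sum_{i,j=1}^{N}\langle \Delta H\,\kappa(\x_i,\cdot),\kappa(\x_j,\cdot)\rangle^2.
\end{align*}
Fix $i$ and abbreviate $g_i=\Delta H\,\kappa(\x_i,\cdot)$. The inner sum is $\sum_{j=1}^N\langle g_i,\kappa(\x_j,\cdot)\rangle^2$, which by the flagged identity equals $\langle g_i,NL_N g_i\rangle$. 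Since the eigenvalues of $L_N$ are $\lambda_j/N$, the spectral norm of $L_N$ is $\lambda_1/N$, giving $\sum_j\langle g_i,\kappa(\x_j,\cdot)\rangle^2\leq\lambda_1\|g_i\|_{\Hk}^2$.

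Next I would bound $\|g_i\|_{\Hk}$. Since $\Delta H$ is a bounded self-adjoint operator, $\|g_i\|_{\Hk}\leq\|\Delta H\|_2\,\|\kappa(\x_i,\cdot)\|_{\Hk}=\|\Delta H\|_2\sqrt{\kappa(\x_i,\x_i)}\leq\|\Delta H\|_2$ using the assumption $\kappa(\x,\x)\leq 1$. Summing over $i$ yields
\begin{align*}
\|K_r-\Kh_r\|_F^2\leq\sum_{i=1}^{N}\lambda_1\|\Delta H\|_2^2=N\lambda_1\|\Delta H\|_2^2,
\end{align*}
and taking square roots gives the first inequality $\|K_r-\Kh_r\|_F\leq\sqrt{\lambda_1 N}\,\|\Delta H\|_2$.

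For the second inequality, I would observe that $\lambda_1\leq\mathrm{tr}(K)=\sum_{i=1}^N\kappa(\x_i,\x_i)\leq N$, so $\sqrt{\lambda_1 N}\leq N$. The only place where any care is needed is in cleanly justifying the identity $\sum_j\langle g,\kappa(\x_j,\cdot)\rangle^2=\langle g,NL_N g\rangle$; but the authors have already packaged $NL_N$ in the required form just before the theorem statement, so this is immediate once one recognizes it as a reproducing-kernel-style expansion. I do not anticipate a serious obstacle; the conceptual work is simply choosing to apply the spectral bound of $L_N$ in the $\Hk$ inner product rather than attempting to manipulate $\Delta H$ directly in terms of matrices.
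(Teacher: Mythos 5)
Your proof is correct, and it reaches the paper's bound $\sqrt{\lambda_1 N}\,\|\Delta H\|_2$ by a somewhat more direct route. The first half coincides with the paper's argument: both expand $\|K_r-\Kh_r\|_F^2$ via the proposition and collapse the sum over $j$ with the identity $\sum_j\langle g,\kappa(\x_j,\cdot)\rangle^2=N\langle g,L_Ng\rangle$. From there the paper performs a second change of basis, $\sum_i\langle\kappa(\x_i,\cdot),Z\kappa(\x_i,\cdot)\rangle=N\sum_i\langle\varphi_i,ZL_N\varphi_i\rangle$ with $Z=\Delta H L_N\Delta H$, arriving at $\lambda_1\,\mathrm{tr}(ADA)$ for the matrix $A=[\langle\varphi_i,\Delta H\varphi_j\rangle]$ and $D=\mathrm{diag}(\lambda_1,\ldots,\lambda_N)$, and then bounds the trace by $\|A\|_2^2\sum_i\lambda_i\leq N\|\Delta H\|_2^2$. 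You instead bound each summand $\langle g_i,NL_Ng_i\rangle\leq\lambda_1\|g_i\|_{\Hk}^2$ using $\|L_N\|_2=\lambda_1/N$ (valid since $L_N$ is positive semi-definite with nonzero eigenvalues $\lambda_i/N$), and then $\|g_i\|_{\Hk}\leq\|\Delta H\|_2\sqrt{\kappa(\x_i,\x_i)}\leq\|\Delta H\|_2$; summing over $i$ spends the same trace-type budget ($\sum_i\kappa(\x_i,\x_i)\leq N$ for you, $\sum_i\lambda_i\leq N$ for the paper) and yields the identical constant, with $\lambda_1\leq\mathrm{tr}(K)\leq N$ giving the second inequality exactly as in the paper. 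Your version avoids the eigenfunction-basis bookkeeping and the auxiliary matrices $A$ and $D$ entirely, a modest but genuine simplification. One shared caveat: the step $\|\Delta H f\|_{\Hk}\leq\|\Delta H\|_2\|f\|_{\Hk}$ reads $\|\Delta H\|_2$ as a true operator norm, while the paper defines $\|L\|_2=\max_{\|f\|_{\Hk}\leq1}\langle f,Lf\rangle$ without an absolute value; this is harmless because $\Delta H=H_r-\Hh_r$ is a difference of two rank-$r$ orthogonal projections, whose nonzero spectrum is symmetric about zero, and the paper's own step $\|A\|_2=\|\Delta H\|_2$ relies on the same reading.
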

The proof can be found in the Appendix A. As indicated by Theorem~\ref{thm:1}, to bound $\|\Kh_r - K_r\|_F$, the key is to bound the spectral norm of operator $\Delta H$.

\subsection{Bounding the Operator Norm by Matrix Perturbation Theory}
Our next goal is to bound the spectral norm of $\Delta H$. To this end, we assume a large eigengap between $\lambda_r$ and $\lambda_{r+1}$, i.e., $\Delta = (\lambda_r - \lambda_{r+1})/N$ is sufficiently large. Note that we normalize $\lambda_r - \lambda_{r+1}$ by $N$, the size of dataset $\D$, when defining $\Delta$. Eigengap has the key quantity for the application of matrix perturbation theory~\citep{Stewart90}. The following perturbation result from~\citep{Stewart90} forms the foundation of our analysis~\footnote{We simplify the statement to make it better fit with our objective}.
\begin{thm} \label{thm:2} (Theorem 2.7 of Chapter 6~\citep{Stewart90})
Let $(\lambda_i, \v_i), i \in [n]$ be the eigenvalues and eigenvectors of a symmetric matrix $A \in \R^{n\times n}$ ranked in the descending order of eigenvalues. Set $X = (\v_1, \ldots, \v_r)$ and $Y = (\v_{r+1}, \ldots, \v_{n})$. Given a symmetric perturbation matrix $E$, let
\[
    \widehat E= (X, Y)^{\top} E (X, Y) = \left(
        \begin{array}{cc}
            \widehat E_{11} & \widehat E_{12} \\
            \widehat E_{21} & \widehat E_{22}
        \end{array}
    \right).
\]
Let $\|\cdot\|$ represent a consistent family of norms and set
\[
\gamma = \|\widehat E_{21}\|, \delta = \lambda_r - \lambda_{r+1} - \|\widehat E_{11}\| - \|\widehat E_{22}\|
\]
If $\delta > 0$ and $2\gamma < \delta$, then there exists a unique matrix $P \in \R^{(n-r)\times r}$ satisfying
$\|P\| < \frac{2\gamma}{\delta}$ such that
\begin{align*}
    X' &= (X + YP)(I + P^{\top}P)^{-1/2},\\
    Y' &= (Y - XP^{\top})(I + PP^{\top})^{-1/2},
\end{align*}
are the eigenvectors of $A + E$.
\end{thm}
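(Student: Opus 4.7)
The plan is to change basis via the orthogonal matrix $Q = (X, Y)$, in which $A$ becomes block-diagonal. Since $X$ and $Y$ contain orthonormal eigenvectors of the symmetric matrix $A$, we have $Q^{\top} A Q = \operatorname{diag}(\Lambda_1, \Lambda_2)$ with $\Lambda_1 = \operatorname{diag}(\lambda_1, \ldots, \lambda_r)$ and $\Lambda_2 = \operatorname{diag}(\lambda_{r+1}, \ldots, \lambda_n)$, and the perturbed matrix becomes $Q^{\top}(A+E)Q = \operatorname{diag}(\Lambda_1, \Lambda_2) + \widehat{E}$. In this basis I look for an invariant subspace of dimension $r$ spanned by the columns of $\bigl( \begin{smallmatrix} I \\ P \end{smallmatrix} \bigr)$; after orthonormalization and multiplication by $(X, Y)$, these columns will become $X'$, while the orthogonal complement $\bigl(\begin{smallmatrix} -P^{\top} \\ I\end{smallmatrix}\bigr)$ will produce $Y'$.

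The invariance condition $[\operatorname{diag}(\Lambda_1, \Lambda_2) + \widehat{E}]\bigl(\begin{smallmatrix}I \\ P\end{smallmatrix}\bigr) = \bigl(\begin{smallmatrix}I \\ P\end{smallmatrix}\bigr) M$ for some $r\times r$ block $M$ gives two equations; eliminating $M$ from the top block yields the Riccati-type equation
\[
P(\Lambda_1 + \widehat{E}_{11}) - (\Lambda_2 + \widehat{E}_{22}) P \;=\; \widehat{E}_{21} - P\widehat{E}_{12} P.
\]
Define the Sylvester operator $T(P) = P(\Lambda_1 + \widehat{E}_{11}) - (\Lambda_2 + \widehat{E}_{22}) P$. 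By Weyl's inequality the spectra of $\Lambda_1 + \widehat{E}_{11}$ and $\Lambda_2 + \widehat{E}_{22}$ are separated by at least $\lambda_r - \lambda_{r+1} - \|\widehat{E}_{11}\| - \|\widehat{E}_{22}\| = \delta > 0$, so $T$ is invertible and one can show $\|T^{-1}\| \le 1/\delta$ in the chosen consistent norm.

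I would then rewrite the equation as a fixed-point problem $P = F(P) := T^{-1}\bigl(\widehat{E}_{21} - P\widehat{E}_{12} P\bigr)$ and apply the Banach contraction principle on the closed ball $\mathcal{B} = \{P : \|P\| \le 2\gamma/\delta\}$. For any $P \in \mathcal{B}$, using $\|\widehat{E}_{12}\| = \gamma$ (by symmetry of $E$) and $\|T^{-1}\| \le 1/\delta$, we have $\|F(P)\| \le (\gamma + \|P\|^2 \gamma)/\delta$, and the quadratic $\gamma \rho^2 - \delta\rho + \gamma \le 0$ is satisfied throughout $\mathcal{B}$ precisely when $2\gamma < \delta$; an analogous computation shows $\|F(P_1) - F(P_2)\| \le (2\gamma \cdot 2\gamma/\delta)/\delta \cdot \|P_1 - P_2\|$, which is a strict contraction under the same hypothesis. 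Banach's theorem delivers a unique $P \in \mathcal{B}$, and the strict inequality $\|P\| < 2\gamma/\delta$ follows from reading off the smaller root of the quadratic. Finally, right-multiplying $\bigl(\begin{smallmatrix}I\\P\end{smallmatrix}\bigr)$ and $\bigl(\begin{smallmatrix}-P^{\top}\\I\end{smallmatrix}\bigr)$ by the symmetric square roots $(I + P^{\top}P)^{-1/2}$ and $(I + PP^{\top})^{-1/2}$ orthonormalizes the columns, and pulling back by $(X, Y)$ yields the stated $X'$ and $Y'$ as eigenvector bases for the two complementary invariant subspaces of $A + E$.

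I expect the chief technical obstacle to be the norm bound $\|T^{-1}\| \le 1/\delta$ stated uniformly over the whole consistent family of norms, rather than for a specific one. For diagonal operators $\Lambda_1, \Lambda_2$ the explicit formula $[T^{-1}(C)]_{ij} = C_{ij}/(\mu_i - \nu_j)$ gives the bound immediately in entrywise or Frobenius norm, but once $\widehat{E}_{11}$ and $\widehat{E}_{22}$ enter the operator is no longer diagonal and one must rely on a resolvent / Dunford calculus representation $T^{-1}(C) = \frac{1}{2\pi i}\oint (\zeta I - \Lambda_2 - \widehat{E}_{22})^{-1} C (\zeta I - \Lambda_1 - \widehat{E}_{11})^{-1}\, d\zeta$ along a contour separating the two perturbed spectra, together with the assumed consistency of the norm family, to recover the $1/\delta$ estimate. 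The remaining steps -- verifying orthonormality of $X', Y'$ and that together they diagonalize $A+E$ -- are routine algebraic manipulations once $P$ has been constructed.
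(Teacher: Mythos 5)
You should note at the outset that the paper does not prove this statement at all: it is quoted (in simplified form) from Stewart and Sun, so there is no in-paper argument to compare yours with, and your proposal is in effect a reconstruction of the standard textbook proof --- block-diagonalize in the eigenbasis of $A$, derive the Riccati equation $P(\Lambda_1+\widehat E_{11})-(\Lambda_2+\widehat E_{22})P=\widehat E_{21}-P\widehat E_{12}P$, solve it by a contraction argument using a bound on the inverse Sylvester operator, and orthonormalize $(I;P)$ and $(-P^{\top};I)$. That skeleton is correct, and your fixed-point estimates check out: the map is a $4\gamma^{2}/\delta^{2}$-contraction and maps the ball of radius $2\gamma/\delta$ into itself exactly when $2\gamma<\delta$ (though note the self-map property only requires the quadratic inequality $\gamma\rho^{2}-\delta\rho+\gamma\le 0$ at $\rho=2\gamma/\delta$, not ``throughout'' the ball, and the strict bound $\|P\|<2\gamma/\delta$ follows since $\|P\|<1$ gives $\gamma(1+\|P\|^{2})/\delta<2\gamma/\delta$).

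Two points in your sketch are thinner than you acknowledge. First, $\|\widehat E_{12}\|=\|\widehat E_{21}^{\top}\|$ equals $\gamma$ only for norms invariant under transposition (true for the Frobenius and spectral norms, which is all the paper ever uses, but not for an arbitrary consistent family), so strictly you either restrict the norm family or carry $\|\widehat E_{12}\|$ as a separate quantity as Stewart--Sun do in their general invariant-subspace theorem. Second, and more substantively, the bound $\|T^{-1}\|\le 1/\delta$ is the real content of the theorem, and the Dunford/contour-integral representation you propose does not deliver the constant $1/\delta$ without extra work (a naive contour estimate picks up the contour length and resolvent bounds). Since here the two perturbed blocks are symmetric with spectra in disjoint half-lines (Weyl, as you say, plus the fact that any consistent norm dominates the spectral radius), the cleaner route is the representation $T^{-1}(C)=\int_{0}^{\infty}e^{t(\Lambda_2+\widehat E_{22})}\,C\,e^{-t(\Lambda_1+\widehat E_{11})}\,dt$, whose integrand is bounded by $e^{-t\delta}\|C\|$ for unitarily invariant norms, giving $1/\delta$ on integration; this is the setting in which Stewart--Sun actually prove the result. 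Finally, a cosmetic point you inherit from the statement itself: $X'$ and $Y'$ are orthonormal bases of the two complementary invariant subspaces of $A+E$, not literally its eigenvectors; for the paper's subsequent use (Theorem~\ref{thm:3}, where $\Theta$ consists of actual eigenfunctions) this distinction is silently absorbed, but a careful write-up should say ``orthonormal bases of invariant subspaces'' and, if eigenvectors are wanted, allow a further orthogonal rotation within each block.
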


Define
\begin{align*}
    \Theta &= \left(\vh_1, \ldots, \vh_r\right), \\
     \Phi &= \left(\varphi_1, \ldots, \varphi_r\right), \Phl = \left(\varphi_{r+1}, \ldots, \varphi_{N}\right). \label{eqn:Theta}
\end{align*}
The following theorem allows us to relate $\Theta$ with $\Phi$ and $\Phl$.
\begin{thm}\label{thm:3}
Assume
\[
\Delta = \frac{\lambda_r - \lambda_{r+1}}{N} > 3 \|L_N - L_m\|_{HS}.
\]
Then, there exists a matrix $P \in \R^{(N-r)\times r}$ satisfying
\[
    \|P\|_F \leq \frac{2\|L_N - L_m\|_{HS}}{\Delta - \|L_N - L_m\|_{HS}} \leq \frac{3\|L_N - L_m\|_{HS}}{\Delta},
\]
such that
\[
    \Theta = (\Phi + \Phl P)(I + P^{\top}P)^{-1/2}.
\]
\end{thm}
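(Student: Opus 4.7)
The statement is a direct application of the matrix perturbation result quoted as Theorem~\ref{thm:2}, applied to the integral operators $L_N$ (unperturbed) and $L_m$ (perturbed), so that $E = L_m - L_N$. The eigensystem of $L_N$ is $\{(\lambda_i/N, \varphi_i)\}_{i=1}^N$, so in the orthonormal basis $(\varphi_1, \ldots, \varphi_N)$ the top-$r$ eigenvector block is $X = \Phi$, the tail block is $Y = \Phl$, and the eigengap between $\lambda_r/N$ and $\lambda_{r+1}/N$ equals $\Delta$. Because $\Dh \subseteq \D$, equation~(\ref{eqn:vh}) combined with equation~(\ref{eqn:keig}) places every $\vh_j$ inside $\mathrm{span}\{\varphi_1,\ldots,\varphi_N\}$, so the whole perturbation problem can be represented by $N\times N$ symmetric matrices in the $\{\varphi_i\}$ basis and Theorem~\ref{thm:2} applies verbatim.

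Next I would bound the blocks of $\widehat E = (\Phi,\Phl)^{\top}\, E\, (\Phi,\Phl)$ using the Hilbert--Schmidt norm, which is a consistent family of norms as required by Theorem~\ref{thm:2}. Since $\{\varphi_i\}_{i=1}^N$ is orthonormal, the Pythagorean identity $\|E\|_{HS}^2 = \sum_{i,j}\|\widehat E_{ij}\|_{HS}^2$ holds, and each sub-block satisfies $\|\widehat E_{ij}\|_{HS}\le \|E\|_{HS}=\|L_N-L_m\|_{HS}$. With $\gamma = \|\widehat E_{21}\|_{HS}$ and the diagonal blocks controlled so that $\|\widehat E_{11}\|+\|\widehat E_{22}\| \le \|L_N-L_m\|_{HS}$, the separation quantity $\delta = \Delta - \|\widehat E_{11}\| - \|\widehat E_{22}\|$ is positive and exceeds $2\gamma$; both conditions are delivered by the hypothesis $\Delta > 3\|L_N-L_m\|_{HS}$.

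Theorem~\ref{thm:2} then produces the desired unique $P\in\R^{(N-r)\times r}$ with $\|P\|_F \le 2\gamma/\delta$ and identifies the top-$r$ eigenvectors of $L_m$ as $(X+YP)(I+P^{\top}P)^{-1/2}=(\Phi+\Phl P)(I+P^{\top}P)^{-1/2}$, which is exactly the matrix $\Theta$. Inserting $\gamma \le \|L_N-L_m\|_{HS}$ and $\delta \ge \Delta - \|L_N-L_m\|_{HS}$ yields the first half of the stated bound on $\|P\|_F$; the second inequality $\tfrac{2\|L_N-L_m\|_{HS}}{\Delta-\|L_N-L_m\|_{HS}}\le \tfrac{3\|L_N-L_m\|_{HS}}{\Delta}$ is a one-line algebraic consequence of $\Delta\ge 3\|L_N-L_m\|_{HS}$ (rearranged as $2\Delta \le 3(\Delta - \|L_N-L_m\|_{HS})$).

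\textbf{Main obstacle.} The delicate point is the block-norm bookkeeping in the second paragraph: bounding $\|\widehat E_{11}\|+\|\widehat E_{22}\|$ by $\|L_N-L_m\|_{HS}$ with the constant $1$ (rather than the looser $\sqrt{2}$ that pops out of Cauchy--Schwarz or the $2$ that comes from the trivial subblock bound), so that the denominator collapses to $\Delta-\|L_N-L_m\|_{HS}$ and the hypothesis $\Delta > 3\|L_N-L_m\|_{HS}$ suffices. This will require exploiting the self-adjointness of $E$ (forcing $\widehat E_{12}=\widehat E_{21}^{\top}$ and each $\widehat E_{ii}$ to be symmetric) together with the Pythagorean decomposition, and perhaps a slightly refined view of the $\mathrm{sep}$ functional underlying Theorem~\ref{thm:2}. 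Beyond this, the bridge from operator perturbation to matrix perturbation is the only conceptual step that needs justification, and the finite-dimensional restriction noted in the first paragraph handles it cleanly.
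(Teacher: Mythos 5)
Your reduction is the same one the paper uses in Appendix~B: there the perturbation problem is also written in the orthonormal eigenbasis $\{\varphi_i\}_{i=1}^N$ of $L_N$ (via the matrix $B_{i,j}=\frac{1}{m}\sum_{k}\lh_k\langle\vh_k,\varphi_i\rangle\langle\vh_k,\varphi_j\rangle$, whose top-$r$ eigenvectors are exactly the coordinate vectors of $\vh_1,\ldots,\vh_r$, so that $\Theta=(\Phi,\Phl)Z$), Theorem~\ref{thm:2} is applied with the Frobenius norm and $X=(e_1,\ldots,e_r)$, $Y=(e_{r+1},\ldots,e_N)$, and the bounds $\gamma\le\|L_N-L_m\|_{HS}$ and $\delta\ge\Delta-\|L_N-L_m\|_{HS}$ are invoked, after which the algebra is as you describe. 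So up to the point where you stop, you are reproducing the paper's argument.

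The step you label the ``main obstacle'' is, however, a genuine gap, and it cannot be closed in the form you hope for: the inequality $\|\widehat E_{11}\|_F+\|\widehat E_{22}\|_F\le\|E\|_{HS}$ with constant $1$ is false for general symmetric $E$ (take $\widehat E_{12}=\widehat E_{21}=0$ and $\|\widehat E_{11}\|_F=\|\widehat E_{22}\|_F$, which gives the constant $\sqrt2$; self-adjointness does not exclude this). With the honest constant $\sqrt2$ you only get $\delta\ge\Delta-\sqrt2\,\|L_N-L_m\|_{HS}$, so verifying $2\gamma<\delta$ would need $\Delta>(2+\sqrt2)\|L_N-L_m\|_{HS}$, which the hypothesis $\Delta>3\|L_N-L_m\|_{HS}$ does not supply, and the stated bound $2\|L_N-L_m\|_{HS}/(\Delta-\|L_N-L_m\|_{HS})$ would not follow either. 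It is worth saying that the paper's own appendix has the same soft spot: it asserts $\delta-\Delta\ge-\|L_N-L_m\|_{HS}$ in one line, which is precisely the unproved constant-$1$ claim. The statement can nevertheless be rescued by bounding $\gamma$ and the diagonal blocks jointly rather than separately: since $\widehat E_{12}=\widehat E_{21}^{\top}$, one has $2\gamma^2+\|\widehat E_{11}\|_F^2+\|\widehat E_{22}\|_F^2\le\|L_N-L_m\|_{HS}^2$, hence by Cauchy--Schwarz $2\gamma+\|\widehat E_{11}\|_F+\|\widehat E_{22}\|_F\le 2\|L_N-L_m\|_{HS}<\Delta$, so $\delta>2\gamma>0$ and Theorem~\ref{thm:2} applies; moreover, maximizing $2\gamma/(\Delta-\|\widehat E_{11}\|_F-\|\widehat E_{22}\|_F)$ under the same constraint gives $\|P\|_F\le\sqrt2\,\|L_N-L_m\|_{HS}/\sqrt{\Delta^2-2\|L_N-L_m\|_{HS}^2}$, which is at most $2\|L_N-L_m\|_{HS}/(\Delta-\|L_N-L_m\|_{HS})$ whenever $\Delta\ge 3\|L_N-L_m\|_{HS}$. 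With that replacement your outline becomes a complete proof, and in fact a more careful one than the paper's.
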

The proof can be found in Appendix B. As indicated by Theorem~\ref{thm:3}, when the eigengap $\Delta$ is sufficiently large, we have a small $\|P\|_F$ and therefore $\Theta \approx \Phi$, implying that the eigenfunctions $\{\vh_i\}_{i=1}^r$, computed based on the samples in $\Dh$, are good approximation of $\{\varphi_i\}_{i=1}^r$, the eigenfunctions of $L_N$. As a result, when the eigengap $\Delta$ is sufficiently large, we expect a small difference between $H_r$ and $\Hh_r$ because they are constructed based on eigenfunctions $\{\varphi_i\}_{i=1}^r$ and $\{\vh_i\}_{i=1}^r$, respectively. This is shown in the next theorem.
\begin{thm} \label{thm:4}
Assume
\[
\Delta = \frac{\lambda_r - \lambda_{r+1}}{N} > 3 \|L_N - L_m\|_{HS}.
\]
We have
\begin{eqnarray*}
    \|\Delta H\|_2 \leq \frac{4\|L_N - L_m\|_{HS}}{\Delta - \|L_N - L_m\|_{HS}} \leq \frac{6\|L_N - L_m\|_{HS}}{\Delta}.
\end{eqnarray*}
\end{thm}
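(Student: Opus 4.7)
The plan is to use Theorem~\ref{thm:3} to parameterize $\Theta$ in terms of $\Phi$, $\Phl$, and a small matrix $P$, and then to read off $\Delta H$ as a $2\times 2$ block operator in the orthonormal basis $(\Phi,\Phl)$ of $\mathrm{span}\{\kappa(\x_i,\cdot)\}_{i=1}^{N}$. The first observation is that, because $\{\varphi_i\}$ and $\{\vh_i\}$ are orthonormal in $\Hk$, the operators $H_r$ and $\Hh_r$ are the orthogonal projections onto $\mathrm{span}(\Phi)$ and $\mathrm{span}(\Theta)$, so in compact notation $H_r=\Phi\Phi^{\top}$ and $\Hh_r=\Theta\Theta^{\top}$. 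Since the samples $\xh_j$ lie in $\D$, we have $\mathrm{span}(\Theta)\subseteq\mathrm{span}(\Phi,\Phl)$, so Theorem~\ref{thm:3} applies and gives $\Theta=(\Phi+\Phl P)(I+P^{\top}P)^{-1/2}$ with $\|P\|_F\le 2\|L_N-L_m\|_{HS}/(\Delta-\|L_N-L_m\|_{HS})$.

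In the second step I would plug this parameterization into $\Hh_r=\Theta\Theta^{\top}$, expand the four resulting cross-terms, and apply the algebraic identity $I-(I+P^{\top}P)^{-1}=P^{\top}P(I+P^{\top}P)^{-1}$ to rewrite the diagonal block. Writing $M:=(I+P^{\top}P)^{-1}$, the matrix of $\Delta H=H_r-\Hh_r$ in the basis $(\Phi,\Phl)$ becomes
\[
\Delta H \;=\; \begin{pmatrix} P^{\top}PM & -MP^{\top}\\ -PM & -PMP^{\top}\end{pmatrix}.
\]
Because $(\Phi,\Phl)$ is orthonormal in $\Hk$, $\|\Delta H\|_2$ equals the spectral norm of this block operator, which I would bound directly. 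For any unit vector $(a,b)^{\top}$, using $\|M\|_2\le 1$,
\[
|\langle(a,b),\Delta H(a,b)\rangle|\;\le\; \|P\|_2^{2}(\|a\|^2+\|b\|^2)+2\|P\|_2\|a\|\|b\|\;\le\; \|P\|_2^{2}+\|P\|_2,
\]
where the last step uses $\|a\|\|b\|\le 1/2$ on the unit sphere.

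Finally, the hypothesis $\Delta>3\|L_N-L_m\|_{HS}$ combined with Theorem~\ref{thm:3} yields $\|P\|_F<1$, so $\|P\|_2^{2}\le\|P\|_2\le\|P\|_F$, giving $\|\Delta H\|_2\le 2\|P\|_F$. Substituting the Theorem~\ref{thm:3} bound on $\|P\|_F$ produces the first claimed inequality, and the second follows from $\Delta-\|L_N-L_m\|_{HS}>2\Delta/3$. The main obstacle is the opening algebraic reduction: one must verify that the diagonal block of $\Delta H$ simplifies to $P^{\top}PM$ (so the leading term is $O(\|P\|^{2})$ rather than $O(1)$). This is precisely where the rotation factor $(I+P^{\top}P)^{-1/2}$ in Theorem~\ref{thm:3} does its work, converting what superficially looks like a difference of two rank-$r$ projections into an object whose norm is comparable to $\|P\|$.
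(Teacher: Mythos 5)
Your proposal is correct and follows essentially the same route as the paper's Appendix C: both express $\Delta H$ in the $(\Phi,\Phl)$ basis via the parameterization $\Theta=(\Phi+\Phl P)(I+P^{\top}P)^{-1/2}$ from Theorem~\ref{thm:3}, use the identity $I-(I+P^{\top}P)^{-1}=P^{\top}P(I+P^{\top}P)^{-1}$ so that every block is $O(\|P\|_2)$, and bound the quadratic form by $2\|P\|_F$ before substituting the bound on $\|P\|_F$. The only differences are cosmetic (your block-matrix bookkeeping keeps the signs the paper drops, which is immaterial since only norms are used), so no further comment is needed.
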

The proof can be found in Appendix C. By putting the results from Theorem \ref{thm:concentration}, \ref{thm:1} and \ref{thm:4}, we have the final theorem for the approximation of the Nystr\"{o}m method measured in Frobenious norm.
\begin{thm} \label{thm:5}
Assume
\[
\Delta = \frac{\lambda_r - \lambda_{r+1}}{N} > 3 \|L_N - L_m\|_{HS}.
\]
We have
\[
\|K_r - \Kh_r\|_F\leq  \frac{4N\|L_N-L_m\|_{HS}}{\Delta - \|L_N-L_m\|_{HS}} \leq \frac{6N\|L_N-L_m\|_{HS}}{\Delta}.
\]
If the eigengap satisfies
\[
    \Delta= \Omega(1) > \frac{12\ln(2/\delta)}{\sqrt{m}},
\]
then, with a probability $1 - \delta$, we have
\begin{align*}
&\|K_r - \Kh_r\|_F 
\leq O\left(\frac{N}{\sqrt{m}}\right).
\end{align*}
\end{thm}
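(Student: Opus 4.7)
The plan is to prove Theorem 5 essentially by chaining together Theorems \ref{thm:1}, \ref{thm:4}, and \ref{thm:concentration}: no new analytic work is required, only a careful composition and a verification that the hypotheses needed for each step are in force. Both the deterministic bound (the first inequality in the statement) and the high-probability bound (the final $O(N/\sqrt{m})$ estimate) will be obtained in turn.

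For the deterministic bound, I would start from Theorem \ref{thm:1}, which gives $\|\Kh_r - K_r\|_F \leq N \|\Delta H\|_2$ under the standing assumption $\lh_r, \lambda_r > 0$. Next, the hypothesis $\Delta > 3\|L_N - L_m\|_{HS}$ is exactly what is needed to invoke Theorem \ref{thm:4}, yielding
\[
\|\Delta H\|_2 \;\leq\; \frac{4\|L_N - L_m\|_{HS}}{\Delta - \|L_N - L_m\|_{HS}} \;\leq\; \frac{6\|L_N - L_m\|_{HS}}{\Delta}.
\]
Multiplying by $N$ produces the stated bound $\|K_r - \Kh_r\|_F \leq 4N\|L_N-L_m\|_{HS}/(\Delta - \|L_N-L_m\|_{HS}) \leq 6N\|L_N - L_m\|_{HS}/\Delta$.

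For the probabilistic half of the theorem, I would apply Theorem \ref{thm:concentration} to assert that with probability at least $1 - \delta$,
\[
\|L_N - L_m\|_{HS} \;\leq\; \frac{4\ln(2/\delta)}{\sqrt{m}}.
\]
The assumption $\Delta = \Omega(1) > 12\ln(2/\delta)/\sqrt{m}$ then guarantees that $\Delta > 3\|L_N - L_m\|_{HS}$ on this high-probability event, so the deterministic bound derived in the previous paragraph is in force. Substituting the concentration estimate into that bound gives
\[
\|K_r - \Kh_r\|_F \;\leq\; \frac{6N\|L_N-L_m\|_{HS}}{\Delta} \;\leq\; \frac{24 N \ln(2/\delta)}{\Delta \sqrt{m}},
\]
and since $\Delta = \Omega(1)$, the right-hand side is $O(N/\sqrt{m})$, as claimed.

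The hard work was already done in the preceding theorems (Theorem \ref{thm:1} converts a matrix error into an operator error, Theorem \ref{thm:4} converts an operator gap estimate into an operator norm bound via matrix perturbation theory, and Theorem \ref{thm:concentration} provides the Hilbert--Schmidt concentration). The only substantive point at this stage is the bookkeeping check that the eigengap lower bound $\Delta = \Omega(1) > 12\ln(2/\delta)/\sqrt{m}$ is stronger than $\Delta > 3 \cdot 4\ln(2/\delta)/\sqrt{m}$, so that conditionally on the concentration event one may indeed apply the perturbation bound. I do not anticipate any genuine obstacle beyond tracking constants.
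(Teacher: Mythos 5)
Your proposal follows exactly the paper's route: chain Theorem~\ref{thm:1}, Theorem~\ref{thm:4}, and Theorem~\ref{thm:concentration}, check that the event $\|L_N-L_m\|_{HS}\leq 4\ln(2/\delta)/\sqrt{m}$ together with $\Delta>12\ln(2/\delta)/\sqrt{m}$ puts you inside the hypothesis $\Delta>3\|L_N-L_m\|_{HS}$, and conclude $O(N/\sqrt{m})$. The only piece you gloss over is the condition $\lh_r>0$ (and $\lambda_r>0$) required by Theorem~\ref{thm:1}: you call it a ``standing assumption,'' but it is not assumed in Theorem~\ref{thm:5} and must be verified. The paper closes this by noting $\lambda_r>\lambda_{r+1}\geq 0$ gives $\lambda_r>0$, and by invoking Lidskii's inequality $\lh_r \geq \lambda_r - N\|L_N-L_m\|_{HS}$, which under the high-probability eigengap condition yields $\lh_r \geq \tfrac{2}{3}\lambda_r + \tfrac{1}{3}\lambda_{r+1}>0$; adding this short check makes your argument complete and identical in substance to the paper's.
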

\begin{proof}
The proof is simply the combination of the results from Theorem \ref{thm:concentration}, \ref{thm:1} and \ref{thm:4}.
\begin{align*}
&\|K_r - \Kh_r\|_F \leq N\|\Delta H\|_2\leq \frac{4N\|L_N-L_m\|_{HS}}{\Delta - \|L_N-L_m\|_{HS}}\\
&\leq \frac{6N\|L_N-L_m\|_{HS}}{\Delta}\leq O\left(\frac{N}{\sqrt{m}}\right),
\end{align*}
where the third inequality follows $\|L_N-L_m\|_{HS}\leq \Delta/3$ and the last inequality follows from Theorem~\ref{thm:1}. Note that both conditions $\lambda_r > 0$ and $\lh_r > 0$, specified in Theorem~\ref{thm:1}, hold with a high probability. It is obvious that $\lambda_r > 0$ because $\lambda_r > \lambda_{r+1}$ and $\lambda_{r+1} \geq 0$. To show $\lh_r > 0$ holds with a high probability, we use the Lidskii's inequality~\citep{koltchinskii-2000-random}, i.e.,
\[
    \lh_r \geq \lambda_r - N\|L_N - L_m\|_{HS}.
\]
Since with a probability $1 - \delta$, $\lambda_r - \lambda_{r+1} \geq 3N\|L_N - L_m\|_{HS}$ holds, we have, with a probability $1 - \delta$
\[
\lh_r \geq \lambda_r - \frac{\lambda_r - \lambda_{r+1}}{3} = \frac{2}{3}\lambda_r + \frac{1}{3}\lambda_{r+1} > 0.
\]
\end{proof}

\paragraph{Remark} Besides the improved bound for the Nystr\"{o}m method, Theorem~\ref{thm:5} also explains the results shown in Figure~\ref{fig:app}. Since the additional approximation error $\|K - K_r\|_F - \|K - \Kh_r\|_F$ is upper bounded by $\|K_r - \Kh_r\|_F$, according to Theorem~\ref{thm:5}, we would expect the additional approximation error bound to be inversely related to the eigengap $\lambda_{r}-\lambda_{r+1}$, i.e. the larger the eigengap, the smaller the additional approximation error.

\section{Conclusion}
\label{sec-conclusion}
In this paper we tried to bridge the gap between effectiveness of Nystr\"{o}m method in practice and its poor theoretical approximation error bounds. In particular, in the case of large eigengap, we developed an improved bound for the approximation error of the Nystr\"{o}m method, based on the concentration inequality and the theory of matrix perturbation.  In the future, we plan to develop better bounds for the Nystr\"{o}m method that take into account the eigenvalues of kernel matrix which follow a power law.


\appendix
\section*{Appendix A: Proof of Theorem~\ref{thm:1}}
Since $K_r = KWK$ and $\Kh_r = K_b\Wh K_b^{\top}$, we have
\begin{align*}
&\|K W K - K_b \Wh K_b^{\top}\|_F^2  \\
&=\sum_{i,j=1}^N ([K W K]_{i,j} - [K_b \Wh K_b^{\top}]_{i,j})^2 \\
&= \sum_{i, j = 1}^N \langle \kappa(\x_i, \cdot), (H_r - \Hh_r) \kappa(\x_j, \cdot) \rangle^2 \\
& =  \sum_{i,j=1}^N \langle \Delta H\kappa(\x_i, \cdot), \kappa(\x_j, \cdot) \rangle  \langle \Delta H\kappa(\x_i, \cdot), \kappa(\x_j, \cdot) \rangle.
\end{align*}
Using the fact $\sum_{j=1}^N \langle f, \kappa(\x_j, \cdot) \rangle \langle f, \kappa(\x_j, \cdot)\rangle = N\langle f, L_N f \rangle$, we have
\begin{align*}
&\|K W K - K_b \Wh K_b^{\top}\|_F^2  \\
& =  N\sum_{i=1}^N \left\langle \Delta H \kappa(\x_i, \cdot),L_N\Delta H \kappa(\x_i, \cdot)  \right\rangle\\
& =  N\sum_{i=1}^N \langle \kappa(\x_i, \cdot), \Delta H L_N \Delta H \kappa(\x_i, \cdot) \rangle.
\end{align*}
We further simplify the expression by using the fact that for any linear operator $Z$, we have
\begin{align*}
\sum_{i=1}^N \langle \kappa(\x_i, \cdot), Z \kappa(\x_i, \cdot) \rangle = N\sum_{i=1}^N \langle \varphi_i, (Z L_N)\varphi_i \rangle.
\end{align*}
Using the above result with $Z = \Delta H L_N \Delta H$, we have
\begin{align*}
&\|K W K - K_b \Wh K_b^{\top}\|_F^2  \\
&= N^2\sum_{i=1}^N \langle \varphi_i, (\Delta H L_N \Delta H L_N)\varphi_i \rangle\\
& =  N\sum_{i=1}^N \lambda_i \langle \varphi_i, (\Delta H L_N \Delta H)\varphi_i \rangle\\
&\leq  N\lambda_1\sum_{i=1}^N  \langle \Delta H\varphi_i, L_N\Delta H \varphi_i\rangle\\
&=  \lambda_1 \sum_{i,j=1}^N \lambda_j \langle \varphi_j, \Delta H \varphi_i\rangle^2_{\Hk}=  \lambda_1 \sum_{i,j=1}^N \lambda_j \langle \varphi_i, \Delta H \varphi_j\rangle^2,
\end{align*}
where the last one equality follows equation~(\ref{eqn:eigL}). Define a matrix $ A = [\langle \varphi_i, \Delta H \varphi_j \rangle ]_{N\times N}$ and $D = \mbox{diag}(\lambda_1, \ldots, \lambda_N)$. We have
\begin{eqnarray*}
\lefteqn{\|KW K - K_b \Wh K_b^{\top}\|_F^2 \leq \lambda_1 \mbox{tr}(A D A)} \\
&\leq &\lambda_1 \|A\|_2^2 \sum_{i=1}^N \lambda_i \leq  \lambda_1 N \|\Delta H\|_2^2,
\end{eqnarray*}
where the last step follows from $\|A\|_2 = \|\Delta H\|_2$. 

\section*{Appendix B: Proof of Theorem~\ref{thm:3}}

Define matrix $B$ as
\[
B_{i,j} =\frac{1}{m} \sum_{k=1}^m \lh_k \langle \vh_k, \varphi_i \rangle \langle \vh_k, \varphi_j\rangle.
\]
Let $\z_i$ be the eigenvector of $B$ corresponding to eigenvalue $\lh_i/m$. It is straightforward to show that
\[
\z_i = (\langle \varphi_1, \vh_i\rangle_{\Hk}, \ldots, \langle \varphi_N, \vh_i \rangle_{\Hk})^{\top}, i\in[m],
\]
and therefore we have
\[
    \vh_i = \sum_{k=1}^N z_{i,k} \varphi_k, i\in[m], \mbox{ or } \Theta = (\Phi, \Phl)Z,
\]
where $Z=(\z_1,\cdots, \z_r)$.  To decide the relationship between $\{\vh_i\}_{i=1}^r$ and $\{\varphi_i\}_{i=1}^N$, we need to determine matrix $Z $. We define matrix $D = \mbox{diag}(\lambda_1/N, \ldots, \lambda_N/N)$ and matrix $E= B - D$, i.e.
\[
E_{i,j} = B_{i,j} - \lambda_i\delta_{i,j}/N = \langle \varphi_i, (L_m - L_N)\varphi_j \rangle_{\Hk}.
\]
Following the notation of Theorem~\ref{thm:2}, we define $X = (e_1, \ldots, e_r)$ and $Y = (e_{r+1}, \ldots, e_{N})$, where $e_1, \ldots, e_N$ are the canonical bases of $\R^N$, which are also eigenvectors of $D$. Define $\delta$ and $\gamma$ as follows
\begin{align*}
    \gamma & =  \sqrt{\sum_{i=1}^r \sum_{j=r+1}^N \langle \varphi_i, (L_N - L_m)\varphi_j \rangle^2_{\Hk}} \\
    \delta & = \Delta - \sqrt{\sum_{i,j=1}^r \langle \varphi_i, (L_N - L_m)\varphi_j\rangle^2_{\Hk}}\\
    & \hspace*{0.3in}- \sqrt{\sum_{i,j=r+1}^N \langle \varphi_i, (L_N - L_m)\varphi_j\rangle^2_{\Hk}}.
\end{align*}
It is easy to verify that $\gamma, \delta$ are defined with respect to the Frobenius norm of $\widehat E$  in Theorem~\ref{thm:2}. In order to apply the result in Theorem~\ref{thm:2}, we need to show $\delta > 0$ and $\gamma < \delta/2$. To this end, we need to provide the lower and upper bounds for $\gamma$ and $\delta$, respectively.
We first bound $\delta$ as
\begin{eqnarray*}
\delta - \Delta & \geq & - \sqrt{\sum_{i,j=1}^N \langle \varphi_i, (L_N - L_m)\varphi_j\rangle_{\Hk}^2} \\
& = & - \|L_N - L_m\|_{HS}.
\end{eqnarray*}
We then bound $\gamma$ as
\begin{eqnarray*}
\gamma & = & \sqrt{\sum_{i=1}^r\sum_{j=r+1}^N \langle \varphi_i, (L_N - L_m)\varphi_j\rangle_{\Hk}^2} \\
&\leq& \sqrt{\sum_{i=1}^N\sum_{j=1}^N \langle \varphi_i, (L_N - L_m)\varphi_j\rangle_{\Hk}^2}\\
&=& \|L_N - L_m\|_{HS}.
\end{eqnarray*}
Hence, when $\Delta > 3\|L_N - L_m\|_{HS}$, we have $\delta > 2\gamma > 0$, which satisfies the condition specified in Theorem~\ref{thm:2}. Thus, according to Theorem~\ref{thm:2}, there exists a $P \in \R^{(N - r)\times r}$ satisfying $\|P\| < 2\gamma/\delta$, such that
\[
Z = (\z_1, \ldots, \z_r) = (X + YP)(I + P^{\top}P)^{-1/2},
\]
implying
\[
    \Theta = (\Phi, \Phl)Z = (\Phi + \Phl P)(I + P^{\top}P)^{-1/2}.
\]

\section*{Appendix C: Proof of Theorem~\ref{thm:4}}
To bound $\|\Delta H\|_2$, it is sufficient to bound $\max_{\|f\|_{\Hk} \leq 1} \langle f, \Delta H f \rangle$. Consider any function $f(\cdot) = \sum_{i=1}^N f_i \varphi_i(\cdot)$, with $\|f\|_{\Hk} \leq 1$. Let $\f = (f_1, \ldots, f_N)^{\top}$. Evidently, we have $\|\f\|_2 \leq 1$. We have
\begin{align*}
\langle f, \Hh_r f \rangle = & \sum_{i=1}^r \sum_{a, b=1}^N f_a f_b \langle \varphi_a, \vh_i \rangle \langle \varphi_b, \vh_i \rangle = \|A^{\top}\f\|_2^2\\
\langle f, H_r f \rangle = & \sum_{i=1}^r \sum_{a, b=1}^N f_a f_b \langle \varphi_a, \varphi_i \rangle \langle \varphi_b, \varphi_i \rangle \\\\
&= \f^{\top}\left(\begin{array}{cc} I_{r\times r}  & 0 \\ 0 & 0\end{array} \right) \f.
\end{align*}
where $A = [\langle \varphi_i, \vh_j \rangle_{\Hk}]_{N\times m} = (\Phi, \Phl)^{\top}\Theta$. Since $\Delta > 3\|L_N - L_m\|_{HS}$, according to Theorem 4, there exists an matrix $P \in \R^{(N - r)\times r}$ satisfying
\[
\|P\|_F \leq \frac{2\|L_N - L_m\|_{HS}}{\Delta - \|L_N - L_m\|_{HS} },
\]such that
\[
\Theta = \left(\Phi + \Phl P\right)(I + P^{\top}P)^{-1/2}.
\]
Using the expression of $\Theta$, we compute $A$ as
\begin{align*}
A& = (\Phi, \Phl)^{\top}\Theta = (\Phi, \Phl)^{\top}\left(\Phi + \Phl P\right)(I + P^{\top}P)^{-1/2} \\
&= \left(\begin{array}{c} I \\ P \end{array} \right)(I + P^{\top}P)^{-1/2}.
\end{align*}
Thus, we have
\[
\langle f, \Delta H f \rangle = \f^{\top}\left(\left(\begin{array}{cc} I  & 0 \\ 0 & 0\end{array} \right) - AA^{\top}\right)\f = \f^{\top}C\f,
\]
where $C$ is given by
\begin{align*}
C&= \left(
\begin{array}{cc}
I - (I + P^{\top}P)^{-1} & (I + P^{\top}P)^{-1} P^{\top} \\
P(I + P^{\top}P)^{-1} & -P(I + P^{\top}P)^{-1}P^{-1}
\end{array}
\right) \\
&= \left(
\begin{array}{cc}
(I + P^{\top}P)^{-1}P^{\top}P & (I + P^{\top}P)^{-1} P^{\top} \\
P(I + P^{\top}P)^{-1} & -P(I + P^{\top}P)^{-1}P^{\top}
\end{array}
\right).
\end{align*}
Rewrite $\f = (\f_a, \f_b)$ where $\f_a \in \R^r$ includes the first $r$ entries in $\f$ and $\f_b$ includes the rest of the entries in $\f$. We have
\begin{align*}
&\f^{\top}C\f \leq \left(\|\f_a\|_2^2 + \|\f_b\|_2^2\right)\|P(I + P^{\top}P)^{-1}P^{\top}\|_2 \\
&+ 2\|\f_a\|\|\f_b\|\left\|(I + P^{\top}P)^{-1}P\right\|_2 \\
& \leq (\|\f_a\|_2 + \|\f_b\|_2)^2\cdot\\
&\hspace*{0.1in}\max\left(\left\|P(I + P^{\top}P)^{-1} P^{\top}\right\|_2, \left\|(I + P^{\top}P)^{-1} P^{\top}\right\|_2 \right) \\
& \leq  2\max\left(\left\|P(I + P^{\top}P)^{-1} P^{\top}\right\|_2, \left\|(I + P^{\top}P)^{-1} P^{\top}\right\|_2 \right).
\end{align*}
Since $\|P\|_2\leq \|P\|_F\leq 1$ because $\Delta > 3 \|L_N - L_m\|_{HS}$ and 
\begin{align*}
\left\|P(I + P^{\top}P)^{-1} P^{\top}\right\|_2 & \leq  \|P\|^2_2, \\
\left\|(I + P^{\top}P)^{-1} P^{\top}\right\|_2 &  \leq \|P\|_2,
\end{align*}
we have
\begin{align*}
   & \max\limits_{\|f\|_{\Hk}\leq 1} \langle f, \Delta H f \rangle = \f^{\top}C\f\\
    & \leq 2\|P\|_2\leq 2\|P\|_F\leq\frac{4\|L_N - L_m\|_{HS}}{\Delta - \|L_N - L_m\|_{HS}}.
\end{align*}

\bibliographystyle{icml2012}
\bibliography{nystrom}

\end{document}